\documentclass[letterpaper, 10 pt, conference]{ieeeconf}  

\IEEEoverridecommandlockouts                              

\usepackage[left=1.88cm,right=1.88cm,top=1.88cm,bottom=1.88cm]{geometry}
\usepackage{amsmath,amssymb,amsfonts}
\usepackage{graphicx}
\usepackage{textcomp}
\usepackage{xcolor}
\usepackage{bbm}
\let\labelindent\relax
\usepackage{enumitem}
\usepackage{tabularx}

\usepackage{times}
\usepackage{multicol}

\usepackage{graphicx}
\usepackage{bm}
\usepackage[nolist]{acronym}

\usepackage[linesnumbered,ruled]{algorithm2e}

\usepackage{mathtools}
\usepackage{array}
\usepackage{dblfloatfix}
\usepackage{diagbox}
\usepackage{etoolbox}\AtBeginEnvironment{algorithmic}{\small}
\usepackage{color,soul} 

\usepackage{lipsum}
\usepackage[colorinlistoftodos]{todonotes}

\usepackage{amsthm}

\usepackage{bbm} 
\usepackage[colorlinks = True, linkcolor = blue, citecolor = blue]{hyperref}

\DeclareMathOperator*{\argmin}{arg\,min}

\usepackage{cleveref}
\Crefname{problem}{Problem}{Problems}
\Crefname{thm}{Theorem}{Theorems}
\Crefname{prop}{Proposition}{Propositions}
\Crefname{assum}{Assumption}{Assumptions}
\Crefname{lem}{Lemma}{Lemmas}
\Crefname{enumi}{Assumption}{Assumptions}
\usepackage{rotating}
\usepackage{multirow}
\usepackage{caption}
\usepackage[
    style=ieee,
    doi=false,
    isbn=false,
    url=false,
    eprint=false,
    backend=biber,
    natbib=true
    ]{biblatex}

\bibliography{refs}

\title{\LARGE \bf
Exact, Efficient, and Safe Occlusion-Aware Planning\\
Using AH-Polyhedrons
}

\author{Long Kiu Chung$^{1,2}$, David Isele$^1$, Toktam Mohammadnejad$^1$, Faizan M. Tariq$^1$,\\
Sangjae Bae$^1$, Shreyas Kousik$^2$, Jovin D'sa$^1$
\thanks{
$^1$Honda Research Institute (HRI), Mountain View, CA.
$^2$ Georgia Institute of Technology, Atlanta, GA.
Corresponding Authors: \href{mailto:lchung33@gatech.edu}{\texttt{lchung33@gatech.edu}}, \href{mailto:jovin_dsa@honda-ri.com}{\texttt{jovin\_dsa@honda-ri.com}}. 
All work was done when Long Kiu Chung was employed by HRI.
}}

\begin{document}

\newif\ifshowcomments
 \showcommentstrue  

\ifshowcomments
    \newcommand{\bae}[1]{\hl{[SB: #1]}\protect\color{black}} 
    \newcommand{\di}[1]{\hl{[DI: #1]}\protect\color{black}} 
    \newcommand{\ft}[1]{\hl{[FT: #1]}\protect\color{black}} 
    \newcommand{\tm}[1]{\hl{[TM: #1]}\protect\color{black}} 
    \newcommand{\jd}[1]{\hl{[JD: #1]}\protect\color{black}} 
    \newcommand{\edgar}[1]{{\color{red}{LKC: #1}}\protect\color{black}}
    \newcommand{\shrey}[1]{\hl{[SK: #1]}\protect\color{black}}
    
\else
    \newcommand{\bae}[1]{}
    \newcommand{\di}[1]{}
    \newcommand{\ft}[1]{}
    \newcommand{\tm}[1]{}
    \newcommand{\jd}[1]{}
    \newcommand{\edgar}[1]{}
    \newcommand{\shrey}[1]{}
\fi

\newtheorem{defn}{Definition}
\newtheorem{rem}[defn]{Remark}
\newtheorem{lem}[defn]{Lemma}
\newtheorem{prop}[defn]{Proposition}
\newtheorem{assum}[defn]{Assumption}
\newtheorem{ex}[defn]{Example}
\newtheorem{runex}[defn]{Running Example}
\newtheorem{thm}[defn]{Theorem}
\newtheorem{cor}[defn]{Corollary}
\newtheorem{problem}[defn]{Problem}
\setlist[enumerate,1]{label=\arabic*),
                      ref=\theassum.\arabic*}

\providecommand{\R}{\ensuremath \mathbb{R}}
\newcommand{\N}{\ensuremath \mathbb{N}}

\newcommand{\regtext}[1]{\mathrm{\textnormal{#1}}}
\newcommand{\vc}[1]{\mathbf{#1}}

\newcommand{\suchthat}{\regtext{ s.t. }}

\newcommand{\tp}{^\intercal}
\newcommand{\norm}[1]{\left\Vert#1\right\Vert}
\newcommand{\abs}[1]{\left\vert#1\right\vert}

\newcommand{\eye}{\vc{I}}
\newcommand{\Acon}{\vc{A}}
\newcommand{\Ccon}{\vc{C}}
\newcommand{\rotmat}{\vc{R}}

\newcommand{\zeros}{\mathbf{0}}
\newcommand{\ones}{\mathbf{1}}
\newcommand{\bcon}{\vc{b}}
\newcommand{\dcon}{\vc{d}}
\newcommand{\xv}{\vc{x}}
\newcommand{\yv}{\vc{y}}
\newcommand{\constraint}{\vc{g}}
\newcommand{\vertex}{\vc{w}}

\newcommand{\state}{\vc{x}}
\newcommand{\statenow}{\state_{\ts}}
\newcommand{\stateset}{X}
\newcommand{\ctrl}{\vc{u}}
\newcommand{\ctrlmax}{u_{\regtext{max}}}
\newcommand{\ctrlmin}{u_{\regtext{min}}}
\newcommand{\optctrl}{\ctrl^{*}}
\newcommand{\ctrlnow}{\ctrl_{\ts}}
\newcommand{\ctrlset}{U}
\newcommand{\wspace}{\vc{p}}
\newcommand{\wsset}{P}
\newcommand{\nonws}{\vc{v}}
\newcommand{\nonwsset}{V}
\newcommand{\dyn}{\vc{f}}
\newcommand{\ctrlpol}{\vc{\pi}}
\newcommand{\nwspace}{\ndim_{\regtext{p}}}
\newcommand{\nnonws}{\ndim_{\regtext{v}}}
\newcommand{\nstate}{\ndim_{\regtext{x}}}
\newcommand{\nctrl}{\ndim_{\regtext{u}}}
\newcommand{\volume}{V}
\newcommand{\occupancy}{S}
\newcommand{\xs}{x}
\newcommand{\xsnow}{\xs_\ts}
\newcommand{\xsnext}{\xs_{\ts+1}}
\newcommand{\ys}{y}
\newcommand{\ysnow}{\ys_\ts}
\newcommand{\ysnext}{\ys_{\ts+1}}
\newcommand{\vels}{v}
\newcommand{\velsnow}{\vels_\ts}
\newcommand{\velsnext}{\vels_{\ts+1}}
\newcommand{\accels}{a}
\newcommand{\accelsnow}{\accels_\ts}
\newcommand{\adstate}{\tilde{\state}}
\newcommand{\adstatenow}[1]{{\adstate}_{\ts, #1}}
\newcommand{\adstatenext}[1]{{\adstate}_{\ts + 1, #1}}
\newcommand{\adstateset}{\tilde{\stateset}}
\newcommand{\adwsset}{\tilde{\wsset}}
\newcommand{\adnonws}{\tilde{\nonws}}
\newcommand{\adnonwsset}{\tilde{\nonwsset}}
\newcommand{\adctrl}{\tilde{\ctrl}}
\newcommand{\adctrlnow}[1]{{\adctrl}_{\ts, #1}}
\newcommand{\adctrlnext}[1]{{\adctrl}_{\ts + 1, #1}}
\newcommand{\adctrlset}{\tilde{\ctrlset}}
\newcommand{\adwspace}{\tilde{\wspace}}
\newcommand{\addyn}{\tilde{\dyn}}
\newcommand{\nadstate}[1]{\ndim_{\tilde{\regtext{x}}, #1}}
\newcommand{\nadctrl}[1]{\ndim_{\tilde{\regtext{u}}, #1}}
\newcommand{\nadnonws}[1]{\ndim_{\tilde{\regtext{v}}, #1}}
\newcommand{\advolume}{\tilde{\volume}}
\newcommand{\adoccupancy}{\tilde{\occupancy}}
\newcommand{\adGdyn}{\tilde{\vc{G}}}
\newcommand{\adGdynstate}[1]{\adGdyn_{\regtext{x}, #1}}
\newcommand{\adGdynctrl}[1]{\adGdyn_{\regtext{u}, #1}}
\newcommand{\adhdyn}{\tilde{\vc{h}}}
\newcommand{\adxs}{\tilde{x}}
\newcommand{\adxsnow}[1]{\adxs_{\ts, #1}}
\newcommand{\adxsnext}[1]{\adxs_{\ts+1, #1}}
\newcommand{\adys}{\tilde{y}}
\newcommand{\adysnow}[1]{\adys_{\ts, #1}}
\newcommand{\adysnext}[1]{\adys_{\ts+1, #1}}
\newcommand{\advelsnow}[1]{\tilde{v}_{\ts, #1}}
\newcommand{\advelsnext}[1]{\tilde{v}_{\ts + 1, #1}}
\newcommand{\advelxsnow}[1]{\tilde{v}_{\regtext{x}, {\ts, #1}}}
\newcommand{\advelxsnext}[1]{\tilde{v}_{\regtext{x}, {\ts+1, #1}}}
\newcommand{\advelysnow}[1]{\tilde{v}_{\regtext{y}, {\ts, #1}}}
\newcommand{\advelysnext}[1]{\tilde{v}_{\regtext{y}, {\ts+1, #1}}}
\newcommand{\adaccelsnow}[1]{\tilde{a}_{\ts, #1}}
\newcommand{\adaccelsnext}[1]{\tilde{a}_{\ts + 1, #1}}
\newcommand{\adaccelxsnow}[1]{\tilde{a}_{\regtext{x}, {\ts, #1}}}
\newcommand{\adaccelxsnext}[1]{\tilde{a}_{\regtext{x}, {\ts+1, #1}}}
\newcommand{\adaccelysnow}[1]{\tilde{a}_{\regtext{y}, {\ts, #1}}}
\newcommand{\adaccelysnext}[1]{\tilde{a}_{\regtext{y}, {\ts+1, #1}}}

\newcommand{\bustate}{\hat{\state}}
\newcommand{\bustatenow}{\bustate_{\ts}}
\newcommand{\bustatenext}{\bustate_{\ts + 1}}
\newcommand{\buctrl}{\hat{\ctrl}}
\newcommand{\buctrlpol}{\hat{\ctrlpol}}
\newcommand{\buoccupancy}{\hat{\occupancy}}

\newcommand{\Aconstate}{\Acon_{\regtext{x}}}
\newcommand{\bconstate}{\bcon_{\regtext{x}}}
\newcommand{\Aconctrl}{\Acon_{\regtext{u}}}
\newcommand{\bconctrl}{\bcon_{\regtext{u}}}
\newcommand{\Aconadstate}[1]{\tilde{\Acon}_{\regtext{x}, #1}}
\newcommand{\bconadstate}[1]{\tilde{\bcon}_{\regtext{x}, #1}}
\newcommand{\Aconadctrl}[1]{\tilde{\Acon}_{\regtext{u}, #1}}
\newcommand{\bconadctrl}[1]{\tilde{\bcon}_{\regtext{u}, #1}}
\newcommand{\Aconvol}{\Acon_\regtext{\occupancy}}
\newcommand{\bconvol}{\bcon_\regtext{\occupancy}}
\newcommand{\Aconadvol}[1]{{\Acon}_{\tilde{\regtext{\occupancy}}, #1}}
\newcommand{\Bconadvol}[1]{{\vc{B}}_{\tilde{\regtext{\occupancy}}, #1}}
\newcommand{\bconadvol}[1]{{\vc{b}}_{\tilde{\regtext{\occupancy}}, #1}}
\newcommand{\nconvol}{\ndim_{\regtext{c},\regtext{\occupancy}}}
\newcommand{\nconadvol}[1]{\ndim_{\regtext{c}, \tilde{\regtext{\occupancy}}, #1}}
\newcommand{\Aconhset}[1]{\Acon_{\regtext{\hset}, #1}}
\newcommand{\bconhset}[1]{\bcon_{\regtext{\hset}, #1}}
\newcommand{\Cconhset}[1]{\Ccon_{\regtext{\hset}, #1}}
\newcommand{\dconhset}[1]{\dcon_{\regtext{\hset}, #1}}
\newcommand{\nconhset}[1]{\ndim_{\regtext{c}, \regtext{\hset}, #1}}
\newcommand{\ngenhset}[1]{\ndim_{\regtext{g}, \regtext{\hset}, #1}}
\newcommand{\Aconfhset}[1]{\Acon_{\regtext{\fhset}, #1}}
\newcommand{\bconfhset}[1]{\bcon_{\regtext{\fhset}, #1}}
\newcommand{\Cconfhset}[1]{\Ccon_{\regtext{\fhset}, #1}}
\newcommand{\dconfhset}[1]{\dcon_{\regtext{\fhset}, #1}}
\newcommand{\Acondz}[1]{\Acon_{\regtext{\dz}, #1}}
\newcommand{\bcondz}[1]{\bcon_{\regtext{\dz}, #1}}
\newcommand{\Ccondz}[1]{\Ccon_{\regtext{\dz}, #1}}
\newcommand{\dcondz}[1]{\dcon_{\regtext{\dz}, #1}}
\newcommand{\Aconstop}[1]{\Acon_{\regtext{stop}, #1}}
\newcommand{\bconstop}[1]{\bcon_{\regtext{stop}, #1}}
\newcommand{\Aconsafe}[1]{\Acon_{\regtext{safe}, #1}}
\newcommand{\bconsafe}[1]{\bcon_{\regtext{safe}, #1}}
\newcommand{\Aconobs}[1]{\Acon_{\regtext{\obs}, #1}}
\newcommand{\bconobs}[1]{\bcon_{\regtext{\obs}, #1}}
\newcommand{\Aconminus}{\Acon_{3}}
\newcommand{\bconminus}{\bcon_{3}}
\newcommand{\Cconminus}{\Ccon_{3}}
\newcommand{\dconminus}{\dcon_{3}}
\newcommand{\nconi}[1]{\ndim_{\regtext{c}, #1}}
\newcommand{\Lcon}{\vc{L}}

\newcommand{\braketime}{\ts_{\regtext{brake}}}
\newcommand{\comptime}{\ts_{\regtext{comp}}}
\newcommand{\stepstogoal}{\ndim_{\regtext{goal}}}
\newcommand{\safetyrate}{r_{\regtext{safe}}}
\newcommand{\velbrake}{\vels_{\regtext{brake}}}

\newcommand{\ts}{t}
\newcommand{\is}{i}
\newcommand{\js}{j}
\newcommand{\ks}{k}
\newcommand{\ms}{m}
\newcommand{\maxdist}{\overline{p}}
\newcommand{\gapsize}{g}
\newcommand{\dt}{\Delta\ts}
\newcommand{\heading}{\theta}
\newcommand{\taus}{\tau}
\newcommand{\lambdas}{\lambda}
\newcommand{\length}{\ell}
\newcommand{\width}{w}
\newcommand{\adlength}{\tilde{\length}}
\newcommand{\adwidth}{\tilde{\width}}
\newcommand{\adheading}{\tilde{\heading}}
\newcommand{\cost}{c}
\newcommand{\eps}{\epsilon}

\newcommand{\ndim}{n}
\newcommand{\mdim}{m}
\newcommand{\ncon}{\ndim_{\regtext{c}}}
\newcommand{\ngen}{\ndim_{\regtext{g}}}
\newcommand{\nstop}{\ndim_{\regtext{stop}}}
\newcommand{\ntype}{\ndim_{\regtext{type}}}
\newcommand{\ntime}{\ndim_{\ts}}
\newcommand{\nhset}[1]{\ndim_{\regtext{\hset}, #1}}
\newcommand{\ncost}{\ndim_{\regtext{cost}}}
\newcommand{\nconstraint}{\ndim_{\regtext{con}}}
\newcommand{\nbisect}{\ndim_{\regtext{bi}}}
\newcommand{\nobs}{\ndim_{\regtext{obs}}}
\newcommand{\nvert}[1]{\ndim_{\regtext{vert}, #1}}
\newcommand{\nregion}[1]{\ndim_{\regtext{\appearance}, #1}}
\newcommand{\ngrid}{\ndim_{\regtext{grid}}}

\newcommand{\hpoly}{\mathcal{H}}
\newcommand{\ahpoly}{\mathcal{AH}}

\newcommand{\fhset}{F}
\newcommand{\dz}{D}
\newcommand{\hset}{H}
\newcommand{\obs}{O}
\newcommand{\occlusion}{A}
\newcommand{\appearance}{R}
\newcommand{\candidate}{C}
\newcommand{\polyhedron}{Q}

\newcommand{\ctrllb}{\underline{u}}
\newcommand{\ctrlub}{\overline{u}}
\newcommand{\ctrlmid}{u}
\newcommand{\ctrlsafelb}{\underline{u}_{\regtext{safe}}}
\newcommand{\ctrlsafeub}{\overline{u}_{\regtext{safe}}}

\newcommand{\Aconocc}[1]{\Acon_{\regtext{occ}, #1}}
\newcommand{\bconocc}[1]{\bcon_{\regtext{occ}, #1}}
\newcommand{\Aconvis}{\Acon_{\regtext{vis}}}
\newcommand{\bconvis}{\bcon_{\regtext{vis}}}
\newcommand{\Aconmax}{\overline{\Acon}}
\newcommand{\Aconmin}{\underline{\Acon}}
\newcommand{\bconmax}{\overline{\bcon}}
\newcommand{\bconmin}{\underline{\bcon}}
\newcommand{\ray}{\vc{r}}
\newcommand{\raymin}{\underline{\ray}}
\newcommand{\raymax}{\overline{\ray}}
\newcommand{\jsmin}{\underline{\js}}
\newcommand{\jsmax}{\overline{\js}}
\newcommand{\thetas}{\theta}
\newcommand{\visidx}{K_\regtext{vis}}
\newcommand{\phis}{\phi}
\newcommand{\psis}{\psi}
\newcommand{\av}{\vc{a}}

\newcommand{\mergecost}{\cost_{\regtext{merge}}}

\maketitle


\begin{abstract}
Safely handling occlusions is a fundamental challenge for autonomous mobile robots operating in dynamic environments.
This issue is especially prominent in autonomous valet parking (AVP), where traffic rules are lax, occlusions are frequent and cluttered, and overly conservative behavior can leave vehicles stuck.
However, existing methods either lack formal safety guarantees, assume agents follow road structures, or introduce conservatism, leaving occlusion-aware planning for AVP an open challenge.
In this paper, we propose APRO (AH-Polyhedron Reachability for Occlusions), an exact and efficient occlusion-aware planning framework based on game-theoretic active perception and AH-polyhedron reachability analysis with AVP as our canonical use case.
Our key insight is to reformulate set-based safety conditions in prior work as unions of AH-polyhedrons, enabling \textit{exact} safety verification through linear programming (LP) without \textit{any} additional conservatism in set computations or assumptions on road topology.
We further show how the resulting safety conditions can be integrated into optimization-based planners or a bisection search scheme for real-time applications.
We validate our method in simulation and hardware experiments, including data replay on a real-world parking lot dataset.
Experimental results demonstrate that our method consistently achieved a 100\% safety rate across all evaluated scenarios while maintaining real-time performance, resulting in safer and more optimal decisions than existing methods with formal safety guarantees.
Website: \href{https://sites.google.com/view/chung2026exact}{https://sites.google.com/view/chung2026exact}.

\end{abstract}
\section{Introduction}

In practice, deployed systems rely on sensors that are susceptible to occlusion.
Thus, in addition to observed obstacles and agents, robots should also plan for hidden agents in occluded areas.
However, existing occlusion-aware planners either do not provide safety guarantees \cite{bouton2018scalable,hubmann2019pomdp,yu2019occlusion,yu2020risk,park2023occlusion,kocc2021pedestrian,van2023overcoming,gilhuly2022looking}, assume agents follow road topologies \cite{zhang2021safe, park2023occlusion, koschi2020set, orzechowski2018tackling}, or are overly conservative \cite{firoozi2022occlusion, orzechowski2018tackling, lee2021limited, koschi2020set}.
In contrast, many scenarios involve agents (e.g.\ pedestrians) that deviate from the road structure and tight, cluttered spaces with frequent occlusions, which make deployment of existing methods difficult.
In this paper, we present APRO (AH-Polyhedron Reachability for Occlusions), an occlusion-aware planning framework based on game theory and exact AH-polyhedron (affine transformation of a polyhedron in halfspace representation) reachability analysis that provides safety guarantees without \textit{any} conservativeness in set computation or assumptions on road topologies.
As a practical instantiation, we showcase APRO on autonomous valet parking (AVP) systems, in which traffic rules are lax (agents do not strictly follow road structures), traveling too slow and frequent stopping is undesirable (leaves little room for conservatism), and accidents from cluttered, tight occlusions are frequent \cite{lee2021limited} (safety guarantees are important).
An overview of APRO is shown in \Cref{fig:front_figure}.

\begin{figure}[t]
\centering
    \includegraphics[width=1\columnwidth]{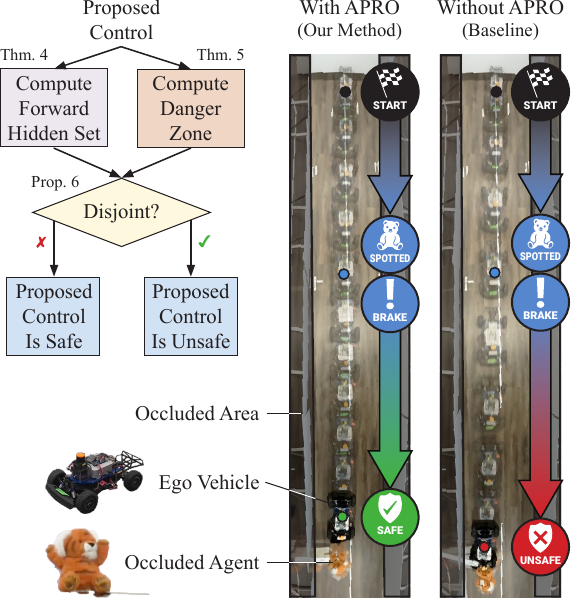}
\caption{
(Top Left) An overview of our method, APRO, with relevant sections labeled.
(Right) The setup for our hardware experiment in \Cref{sec:hardware}.
APRO makes the ego move more cautiously around occluded area in anticipation of emergency braking for occluded agents.}\label{fig:front_figure}
\vspace*{-0.5cm}
\end{figure}

\subsection{Related Work}
We divide the existing literature on occlusion-aware planning into those with and without formal guarantees, the former of which is further categorized into set-based and value-function-based reachability methods.
APRO is a set-based reachability method with formal guarantees.

\subsubsection{Non-Formal Methods}
Approaches without formal guarantees usually handle occlusions by simulating phantom agents and/or by reasoning with probabilities.
For example, some methods use reachability analysis to determine the future distribution of phantom objects, obtained from sampling \cite{yu2019occlusion, yu2020risk} or by intersecting the road topology \cite{park2023occlusion} with the occluded area.
Other methods deter plans in high-risk areas by estimating pedestrian emergence probabilities \cite{kocc2021pedestrian}, computing risk fields \cite{van2023overcoming}, or by maximizing information gain \cite{gilhuly2022looking}.
There are also methods that formulate occlusion-aware planning as a partially observable Markov decision process (POMDP) by integrating an observation model in the belief state \cite{bouton2018scalable, hubmann2019pomdp}.
While these methods are generally faster and/or more versatile, they can still result in collisions, whereas formal methods such as ours do not (under the same modeling assumptions).

\subsubsection{Set-Based Reachability Methods}
Most approaches with formal guarantees are based on set-based reachability analysis that computes the forward reachable set (FRS), which encompasses the adversaries' possible future states.
For example, overapproximative FRS for double integrators (based on Kamm's circle \cite{rajamani2006vehicle}) has been applied to occlusion-aware planners for structured \cite{orzechowski2018tackling, koschi2020set} and unstructured \cite{lee2021limited} road networks.
Similarly, Occlusion-Aware Model Predictive Control (OA-MPC) \cite{firoozi2022occlusion} computes the FRS of adversaries as capsules based on their max speed alone, which can be embedded into quadratic programs (QPs) by overapproximating the ego's volume with a circle.
While these methods are guaranteed to be safe, they are generally more restrictive on the dynamics of the adversaries.
Moreover, as shown in \Cref{fig:narrow_gap} and \Cref{sec:sim_exp}, they all incur overapproximations that can leave the ego stuck.
In constrast, APRO is \textit{exact} and does not compromise between safety and performance.

\subsubsection{Value-Function-Based Reachability Methods}
Instead of computing reachable sets geometrically, methods such as Hamilton-Jacobi (HJ) reachability represents the reachable sets as sub-level sets of value functions.
Our method is mainly inspired by \cite{zhang2021safe}, which proposed mathematical set formulations from game theory that implies safety from occluded agents when the sets are disjoint.
However, their implementation assumes the ego and the adversary follow a road network, with which these sets can be computed trivially along the lanes.
For more general scenarios, they suggested computing the sets with HJ reachability analysis.
That said, HJ reachability is known to be slow, scale poorly with dimensions, and can suffer from numerical inaccuracies that forfeit the safety guarantees \cite{he2025threshold}.
While learning-based HJ methods \cite{fisac2019bridging, bansal2021deepreach} have been proposed to overcome the issues in scalability and speed, approximation errors from neural networks can further loosen the guarantees \cite{yang2025scalable}.
Instead, we propose to reformulate and compute the sets in \cite{zhang2021safe} as AH-polyhedrons, which is faster, safer, more scalable, and exact. 

\subsection{Contributions}
\begin{enumerate}
    \item We propose APRO, a method to compute the occlusion-aware safety conditions in \cite{zhang2021safe} without assuming road structures or incurring conservatism in set computation using AH-polyhedrons, which was shown to be fast, less conservative, and safer compared to existing methods \cite{zhang2021safe, lee2021limited, firoozi2022occlusion} in simulation experiments.
    \item We develop practical control schemes to implement APRO on, including nonlinear optimization and real-time bisection search, enabling integration with existing optimization- and search-based planners.
    \item We demonstrate the praticality of APRO by integrating it on a Hybrid A* AVP planner \cite{chung2026selecting, nawaz2025occupancy} navigating a real-world dataset replay \cite{shen2022parkpredict+} and by deploying it on a F1/10 class robotic vehicle \cite{o2019f1} equipped with lidar sensors.
    APRO maintains a $100\%$ safety rate across all experiments.
\end{enumerate}

\section{Preliminaries}
We now introduce our notation for AH-polyhedrons and succinctly summarize the results in \cite{zhang2021safe}.

\subsection{AH-polyhedrons}
An $\ndim$-dimensional AH-polyhedron $\ahpoly(\Acon, \bcon, \Ccon, \dcon) \subseteq \R^\ndim$ is an affine transformation of an $\ngen$-dimensional polyhedron, parameterized by constraints $\Acon \in \R^{\ncon\times\ngen}, \bcon \in \R^{\ncon}$ and affine transformation $\Ccon \in \R^{\ndim\times\ngen}, \dcon \in \R^{\ndim}$ as \cite{sadraddini2019linear}:
\begin{align}
    \ahpoly(\Acon,\bcon,\Ccon,\dcon) = \{\Ccon\xv+\dcon\ |\ \Acon\xv\leq\bcon\}.
\end{align}
An H-polyhedron $\hpoly(\Acon, \bcon)$ is a special case of AH-polyhedron without affine transformation $\ahpoly(\Acon, \bcon, \eye, \zeros)$.
Bounded H-polyhedrons and AH-polyhedrons are referred to as H-polytopes and AH-polytopes, respectively.

To compute occlusion-aware safety conditions, we make use of the closed-form intersection $\cap$, Cartesian product $\times$, and emptiness check \cite{sadraddini2019linear, herceg2013multi} of H and AH-polyhedrons.
Specifically, an AH-polyhedron $\ahpoly(\Acon, \bcon, \Ccon, \dcon)$ is empty \textit{iff} the linear program (LP) $\min\{\zeros\ |\ \Acon \xv \leq \bcon\}$ is infeasible \cite{chung2025guaranteed}.

\subsection{Game-Theoretic Active Perception}

It is proven in \cite{zhang2021safe} that the ego's current control signal $\ctrlnow$ is safe from an occluded adversarial agent of type $\is$ if the forward hidden set is disjoint from the danger zone, which we formally define below.

The forward hidden set $\fhset_\is \subseteq \R^{\nadstate{\is}}$ is the set of possible adversary states at the next timestep, assuming it is currently being occluded.
Mathematically,
\begin{align}\label{eq:fhset_def}
\begin{split}
    \fhset_\is =& \{\addyn_\is(\adstatenow{\is},\adctrlnow{\is})\ |\ \adstatenow{\is} \in \hset_\is, \adctrlnow{\is} \in \adctrlset_\is\},
\end{split}
\end{align}
where the hidden set $\hset_\is \subseteq \adstateset_\is$ is the set of adversary states where it cannot be detected by the ego at the current timestep, $\addyn_\is: \adstateset_\is\times\adctrlset_\is \to \adstateset_\is$ is the adversary dynamics, and $\adstateset_\is \subseteq \R^{\nadstate{\is}}$ and $\adctrlset_\is \subseteq \R^{\nadctrl{\is}}$ are the adversary's state and control domain.

The danger zone $\dz_\is \subseteq \R^{\nadstate{\is}}$ (also known as capture basin in game theory) is the set of adversary states at the next timestep where there exists a strategy for it to capture the ego before the ego arrives at an invariant safe set.
Mathematically, $\dz_\is = \bigcup_{\js=1}^{\nstop(\ctrlnow)}\dz_{\is,\js}$, where
\begin{align}
    \begin{split}
    \dz_{\is,\js} =& \{\adstatenext{\is}\in \adstateset_\is\ |\ \adoccupancy_{\ts + \js, \is}\cap\buoccupancy_{\ts + \js}\neq\emptyset, \adctrl_{\ts + \ks, \is} \in \adctrlset_\is\\
    &\adstate_{\ts + \ks + 1, \is} = \addyn_\is(\adstate_{\ts + \ks, \is}, \adctrl_{\ts + \ks, \is}),\ks=1, \cdots, \js - 1\},\label{eq:dzone_def}
    \end{split}
\end{align}
where $\nstop:\R^{\nctrl}\to\N$ is the number of timesteps needed for the ego to arrive at an invariant safe set after applying $\ctrlnow$, $\adoccupancy_{\ts + \js, \is} \subseteq\R^{\nwspace}$ is the adversary's occupied space at the $\ts + \js^\regtext{th}$ timestep and a function of the future adversary states $\adstate_{\ts + \js, \is}$, and $\buoccupancy_{\ts + \js} \subseteq\R^{\nwspace}$ is the ego's occupied space at the $\ts + \js^\regtext{th}$ timestep and a function of the future ego states $\bustate_{\ts + \js}$.
In this case, the ego's future states are given by:
\begin{subequations}\label{eq:backup_traj}
\begin{align}
    \bustate_{\ts + \ks + 1} =& \dyn(\bustate_{\ts + \ks}, \buctrl_{\ts + \ks}),\\
    \buctrl_{\ts + \ks} =& \buctrlpol(\bustate_{\ts + \ks}),
\end{align}
\end{subequations}
for $\ks = 1, \cdots, \nstop(\ctrlnow) - 1$, where $\bustatenext = \dyn(\statenow, \ctrlnow)$, $\dyn: \stateset\times\ctrlset\to\stateset$ is the ego dynamics, $\buctrlpol:\stateset\to\ctrlset$ is the backup control policy of the ego, and $\stateset\subseteq\R^{\nstate}$ and $\ctrlset\subseteq\R^{\nctrl}$ are the ego's state and control domains.
The backup control policy guides the ego to an invariant safe set at timestep $\ts + \nstop$ if it were to spot the adversary after applying the control $\ctrlnow$.

If the ego is non-reactive towards the adversary's actions, i.e.\ $\buctrlpol$ is not a function of the adversary's states and control, then the disjoint condition $\fhset_\is \cap \dz_\is = \emptyset$ implies safety for \textit{any} number of adversaries of type $\is$.
A common notion of safety is \textit{passive safety} \cite{koschi2020set,vaskov2019towards}, where the ego is considered safe if it does not collide when it is moving.
In this case, the ego's invariant safe set is defined as the ego being still, and the backup control policy can simply be hard braking.

\section{Problem Formulation}\label{sec:assumptions}
Our overall goal is to derive a control policy $\ctrlnow$ for the ego at the current timestep $\ts$ such that, if the ego were to observe any of the $\ntype$ types of adversaries at the next timestep $\ts + 1$, the ego's backup control policy $\buctrlpol$ is able to guide the ego into an invariant safe set before collision occurs.
We first state assumptions on sets and adversary dynamics, then mathematically describe our problem.


\begin{assum}\label{assum:hpoly_sets}
    All sets are represented as follows:
    \begin{enumerate}
        \item The ego and adversaries' domains are H-polyhedrons $\stateset = \hpoly(\Aconstate, \bconstate), \ctrlset = \hpoly(\Aconctrl, \bconctrl), \adstateset_\is = \hpoly(\Aconadstate{\is}, \bconadstate{\is}), \adctrlset_\is = \hpoly(\Aconadctrl{\is}, \bconadctrl{\is}), \is=1,\cdots,\ntype$.
        \item The ego's occupied space $\buoccupancy_{\ts + \js}$ is an H-polyhedron $\buoccupancy_{\ts + \js} = \hpoly(\Aconvol(\bustate_{\ts + \js}), \bconvol(\bustate_{\ts + \js}))$ for $\js = 1, \cdots, \nstop(\ctrlnow)$, where $\Aconvol: \stateset \to \R^{\nconvol \times \nwspace}$, $\bconvol: \stateset \to \R^{\nconvol}$.
        \item The adversaries' occupied space $\adoccupancy_{\ts + \js, \is}$ is an H-polyhedron $\adoccupancy_{\ts + \js, \is} = \hpoly(\Aconadvol{\is}, \Bconadvol{\is}\adstate_{\ts+\js,\is}+\bconadvol{\is})$
        for $\is=1, \cdots, \ntype, \js = 1, \cdots, \nstop(\ctrlnow)$.
        \item \label{assum:hidden_set} The current hidden set $\hset_\is$ is a union of $\nhset{\is}$ AH-polyhedrons $\hset_\is = \bigcup_{\ks=1}^{\nhset{\is}}\hset_{\is,\ks}$, where $\hset_{\is,\ks}=\ahpoly(\Aconhset{\is,\ks},\bconhset{\is,\ks}, \Cconhset{\is,\ks},\dconhset{\is,\ks})$.
    \end{enumerate}
\end{assum}
\Cref{assum:hpoly_sets} is not overly restrictive, since all convex polyhedrons can be expressed as H-polyhedrons or AH-polyhedrons \cite{sadraddini2019linear}.
If representation of obstacles is given instead of hidden set, \Cref{assum:hidden_set} can be fulfilled by raycasting on the obstacle, then performing Pontryagin difference \cite{herceg2013multi} to account for adversary volume.

\begin{assum}\label{assum:affine_dyn}
    The dynamics of each type of occluded adversary are affine:
    \begin{align}
    \begin{split}
        \adstatenext{\is} = \addyn_\is(\adstatenow{\is},\adctrlnow{\is}) &= \adGdyn_\is \begin{bmatrix}
            \adstatenow{\is}\\
            \adctrlnow{\is}
        \end{bmatrix} + \adhdyn_\is, \\
        &= \adGdynstate{\is}\adstatenow{\is}+\adGdynctrl{\is}\adctrlnow{\is} + \adhdyn_\is,
    \end{split}
    \end{align}
    for all $\adstatenow{\is}, \adstatenext{\is} \in \adstateset_\is, \adctrlnow{\is} \in \adctrlset_\is, \is=1,\cdots,\ntype$, where $\adGdyn_\is=\begin{bmatrix}
        \adGdynstate{\is}\tp & \adGdynctrl{\is}\tp
    \end{bmatrix}\tp\in\R^{\nadstate{\is} \times (\nadstate{\is} + \nadctrl{\is})}, \adGdynstate{\is}\in\R^{\nadstate{\is} \times \nadstate{\is}},\adGdynctrl{\is}\in\R^{\nadstate{\is} \times \nadctrl{\is}}$, and $\adhdyn_\is\in\R^{\nadstate{\is}}$.
\end{assum}
\Cref{assum:affine_dyn} is not overly restrictive either, as it is often sufficient to model occluded adversaries with simple dynamic models such as double integrators \cite{orzechowski2018tackling, koschi2020set, lee2021limited}.
That said, APRO can theoretically be extended to piecewise-affine (PWA) and learned systems by separately evaluating each affine region \cite{chung2024goal, chung2025guaranteed}, which we leave as future work.

\begin{problem}[Occlusion-Aware Safe Control]\label{prob:occlusion_control}
    Given parameters of occupied space $\Aconvol, \bconvol, \Aconadvol{\is}, \Bconadvol{\is}, \bconadvol{\is}$, dynamics $\dyn, \addyn_\is$, backup policy $\buctrlpol$, domain $\stateset, \ctrlset, \adstateset_\is, \adctrlset_\is$, and hidden set $\hset_\is$ for $\is=1,\cdots,\ntype$, find control $\ctrlnow$ such that
    \begin{align}
        \buoccupancy_{\ts + \js} \cap \adoccupancy_{\ts + \js, \is} = \emptyset,
    \end{align}
    for all $\adstatenow{\is}\in\hset_\is,\adctrlnow{\is}, \cdots, \adctrl_{\ts + \js - 1, \is}\in\adctrlset_\is,\is=1,\cdots,\ntype, \js=1,\cdots,\nstop(\ctrlnow)$.
\end{problem}
In a planning framework, occlusion-aware safety can be maintained over a trajectory by repeatedly solving \Cref{prob:occlusion_control} at each timestep.
If no solution exists at a timestep, the ego may default to some backup behavior such as $\buctrlpol(\statenow)$ \cite{zhang2021safe}.

\section{Method}\label{sec:method}
We now show how to formally ensure safety with respect to occluded agents for mobile robots by computing the forward hidden set and danger zone, and verifying their disjointness with LPs.
For an agent of type $\is\in\{1,\cdots,\ntype\}$, the forward hidden set $\fhset_\is$ and the danger zone $\dz_\is$ can be computed with the following theorems:
\begin{thm}[Forward Hidden Set as AH-Polyhedrons]\label{thm:fhset_ahpoly}
    The forward hidden set $\fhset_\is$ for an adversary of type $\is$ is exactly a union of AH-polyhedrons, $\fhset_\is = \bigcup_{\ks=1}^{\nhset{\is}}\fhset_{\is,\ks}$, where: 
    \begin{subequations}
    \begin{align}
        \fhset_{\is,\ks} &= \ahpoly(\Aconfhset{\is, \ks}, \bconfhset{\is, \ks}, \Cconfhset{\is, \ks}, \dconfhset{\is, \ks}),\\
        \Aconfhset{\is, \ks} &= \begin{bmatrix}
            \Aconhset{\is, \ks} & \zeros \\
            \Aconadstate{\is}\adGdynstate{\is}\Cconhset{\is, \ks} & \Aconadstate{\is}\adGdynctrl{\is}\\
            \zeros & \Aconadctrl{\is}
        \end{bmatrix},\\
        \bconfhset{\is, \ks} &= \begin{bmatrix}
            \bconhset{\is,\ks}\\
            \bconadstate{\is}-\Aconadstate{\is}\adGdynstate{\is}\dconhset{\is, \ks} - \Aconadstate{\is}\adhdyn_\is\\
            \bconadctrl{\is}\\
        \end{bmatrix},\\
        \Cconfhset{\is, \ks} &= 
            \begin{bmatrix}\adGdynstate{\is}\Cconhset{\is, \ks} & \adGdynctrl{\is}\end{bmatrix},\\
        \dconfhset{\is, \ks} &= \adGdynstate{\is}\dconhset{\is, \ks} + \adhdyn_\is.
    \end{align}
    \end{subequations}
\end{thm}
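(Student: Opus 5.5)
The plan is to prove set equality by unfolding definitions and using a change of variables that merges the hidden-set generator with the adversary control into one lifted variable. First, the union over the pieces of $\hset_\is$ in Assumption~\ref{assum:hidden_set} commutes with taking images. Since \eqref{eq:fhset_def} is the image of $\hset_\is\times\adctrlset_\is$ under $\addyn_\is$, we get $\fhset_\is=\bigcup_{\ks=1}^{\nhset{\is}}\{\addyn_\is(\adstatenow{\is},\adctrlnow{\is})\mid \adstatenow{\is}\in\hset_{\is,\ks},\ \adctrlnow{\is}\in\adctrlset_\is,\ \addyn_\is(\adstatenow{\is},\adctrlnow{\is})\in\adstateset_\is\}$. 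The last membership makes explicit the requirement in \eqref{eq:fhset_def} that the dynamics map into $\adstateset_\is$; Assumption~\ref{assum:affine_dyn} only asserts the affine form for $\adstatenext{\is}\in\adstateset_\is$. It therefore suffices to show that each such set equals $\fhset_{\is,\ks}$.

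Fix $\ks$ and introduce the lifted variable $\vc{z}=[\xv\tp\ \adctrlnow{\is}\tp]\tp$, where $\xv$ is the generator of $\hset_{\is,\ks}$, so that $\adstatenow{\is}=\Cconhset{\is,\ks}\xv+\dconhset{\is,\ks}$ with $\Aconhset{\is,\ks}\xv\le\bconhset{\is,\ks}$. Substituting into Assumption~\ref{assum:affine_dyn} gives
\begin{align*}
\adstatenext{\is}=\adGdynstate{\is}\Cconhset{\is,\ks}\xv+\adGdynctrl{\is}\adctrlnow{\is}+\adGdynstate{\is}\dconhset{\is,\ks}+\adhdyn_\is=\Cconfhset{\is,\ks}\vc{z}+\dconfhset{\is,\ks}.
\end{align*}
The constraints then translate row-block by row-block:
\begin{itemize}
\item the first block of $\Aconfhset{\is,\ks}$ encodes $\Aconhset{\is,\ks}\xv\le\bconhset{\is,\ks}$;
\item the third block encodes $\adctrlnow{\is}\in\adctrlset_\is$, i.e.\ $\Aconadctrl{\is}\adctrlnow{\is}\le\bconadctrl{\is}$;
\item the second block encodes $\Aconadstate{\is}\adstatenext{\is}\le\bconadstate{\is}$, since moving the constant $\Aconadstate{\is}(\adGdynstate{\is}\dconhset{\is,\ks}+\adhdyn_\is)$ to the right-hand side reproduces exactly the middle entry of $\bconfhset{\is,\ks}$.
\end{itemize}

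Both inclusions now follow. For ``$\subseteq$'', any feasible pair $(\adstatenow{\is},\adctrlnow{\is})$ has some generator $\xv$, and the resulting $\vc{z}$ satisfies $\Aconfhset{\is,\ks}\vc{z}\le\bconfhset{\is,\ks}$ and maps to the same point. For ``$\supseteq$'', any feasible $\vc{z}$ defines $\adstatenow{\is}\in\hset_{\is,\ks}$ and $\adctrlnow{\is}\in\adctrlset_\is$ whose successor lies in $\adstateset_\is$ and equals $\Cconfhset{\is,\ks}\vc{z}+\dconfhset{\is,\ks}$.

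The only delicate point is the role of the second constraint block, that is, the domain requirement $\adstatenext{\is}\in\adstateset_\is$. I would argue it is implicit in the codomain $\addyn_\is:\adstateset_\is\times\adctrlset_\is\to\adstateset_\is$ together with the restricted validity of the affine model in Assumption~\ref{assum:affine_dyn}. Including it is therefore what makes the representation exact rather than an over-approximation.

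A secondary subtlety is whether $\hset_{\is,\ks}\subseteq\adstateset_\is$ must be enforced as well. Since $\hset_\is\subseteq\adstateset_\is$ by definition, no extra constraint is needed. Everything else is routine block-matrix bookkeeping.
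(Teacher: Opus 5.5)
Your proposal is correct and follows the same route as the paper's proof. Both rewrite each piece $\fhset_{\is,\ks}$ using the generator of $\hset_{\is,\ks}$ and the affine dynamics, keep the domain constraint $\Aconadstate{\is}\adstatenext{\is}\le\bconadstate{\is}$, and substitute to obtain the AH-polyhedron in the lifted variable $[\yv\tp\ \adctrlnow{\is}\tp]\tp$; you are simply more explicit about the union commuting with the image, the two inclusions, and why that domain constraint is needed.
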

\begin{proof}
    For each H-polyhedron $\hset_{\is,\ks}$ in the hidden set union, the corresponding forward hidden set $\fhset_{\is,\ks}$ can be written from \eqref{eq:fhset_def} and \Cref{assum:affine_dyn,assum:hpoly_sets} as $\fhset_{\is,\ks}=\{\adstatenext{\is}\ |\ \adstatenow{\is}=\Cconhset{\is, \ks}\yv+\dconhset{\is, \ks}, \Aconhset{\is, \ks}\yv\leq\bconhset{\is,\ks}, \adstatenext{\is}=\adGdyn_\is\begin{bmatrix}
        \adstatenow{\is}\tp&
        \adctrlnow{\is}\tp
    \end{bmatrix}\tp + \adhdyn_\is,\Aconadstate{\is}\adstatenext{\is}\leq\bconadstate{\is}, \Aconadctrl{\is}\adctrlnow{\is}\leq\bconadctrl{\is}\}$.
    Substituting $\adstatenow{\is}=\Cconhset{\is, \ks}\yv+\dconhset{\is, \ks}$ and $\adstatenext{\is}=\adGdyn_\is\begin{bmatrix}
        \adstatenow{\is}\tp &
        \adctrlnow{\is}\tp
    \end{bmatrix}\tp + \adhdyn_\is$ into $\fhset_{\is,\ks}$, we recover the AH-polyhedron above using $\begin{bmatrix}
        \yv\tp & \adctrlnow{\is}\tp
    \end{bmatrix}\tp$ as the states.
\end{proof}
\begin{thm}[Danger Zone as AH-Polyhedrons]\label{thm:dzone_ahpoly}
    Given a control signal $\ctrlnow$, the danger zone for an adversary of type $\is$ is exactly a union of AH-polyhedrons, $\dz_{\is} = \bigcup_{\js=1}^{\nstop(\ctrlnow)}\dz_{\is,\js}$, where:
    \begin{subequations}
    \begin{align}
        &\dz_{\is,\js} =  \ahpoly(\Acondz{\is, \js}, \bcondz{\is, \js}, \Ccondz{\is, \js}, \zeros),\\
        \begin{split}
        &\Acondz{\is,\js}= \\
        & {\setlength{\arraycolsep}{2.1pt}
        \begin{bmatrix}
            \Aconadvol{\is} & -\Bconadvol{\is}\adGdynstate{\is}^{\js-1} & -\Bconadvol{\is}\adGdynstate{\is}^{\js-2}\adGdynctrl{\is} & \cdots & -\Bconadvol{\is}\adGdynctrl{\is}\\
            \Aconvol(\bustate_{\ts + \js}) & \zeros & \zeros & \cdots & \zeros\\
            \zeros & \Aconadstate{\is} & \zeros & \cdots & \zeros\\
            \zeros & \Aconadstate{\is}\adGdynstate{\is} & \Aconadstate{\is}\adGdynctrl{\is} & \cdots & \zeros\\
            \vdots & \vdots & \vdots & \ddots & \vdots\\
            \zeros & \Aconadstate{\is}\adGdynstate{\is}^{\js-1} & \Aconadstate{\is}\adGdynstate{\is}^{\js-2}\adGdynctrl{\is} & \cdots & \Aconadstate{\is}\adGdynctrl{\is}\\
            \zeros & \zeros & \Aconadctrl{\is} & \cdots & \zeros\\
            \vdots & \vdots & \vdots & \ddots & \vdots\\
            \zeros & \zeros & \zeros & \cdots & \Aconadctrl{\is}
        \end{bmatrix},
        }
        \end{split}\\
        & \bcondz{\is, \js} = \begin{bmatrix}
            \bconadvol{\is} + \sum_{\ks=0}^{\js - 2}\Bconadvol{\is}\adGdynstate{\is}^{\ks}\adhdyn_\is\\
            \bconvol(\bustate_{\ts + \js})\\
            \bconadstate{\is}\\
            \bconadstate{\is} - \Aconadstate{\is}\adhdyn_\is\\
            \vdots\\
            \bconadstate{\is} - \sum_{\ks=0}^{\js-2}\Aconadstate{\is}\adGdynstate{\is}^{\ks}\adhdyn_\is\\
            \bconadctrl{\is}\\
            \vdots\\
            \bconadctrl{\is}
        \end{bmatrix},\\
        &\Ccondz{\is, \js} = \begin{bmatrix}
            \zeros_{\nadstate{\is}\times\nwspace} & \eye_{\nadstate{\is}} & \zeros
        \end{bmatrix}.
    \end{align}
    \end{subequations}
\end{thm}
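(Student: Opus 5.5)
We follow the same pattern as the proof of \Cref{thm:fhset_ahpoly}: write each $\dz_{\is,\js}$ from \eqref{eq:dzone_def} as the projection of a set of stacked variables that satisfy linear inequalities. The lifted variable is $\yv = \begin{bmatrix}\wspace\tp & \adstatenext{\is}\tp & \adctrl_{\ts+1,\is}\tp & \cdots & \adctrl_{\ts+\js-1,\is}\tp\end{bmatrix}\tp$. Here $\wspace\in\R^{\nwspace}$ is a witness point of the intersection $\adoccupancy_{\ts+\js,\is}\cap\buoccupancy_{\ts+\js}$. With this ordering, $\Ccondz{\is,\js}=\begin{bmatrix}\zeros & \eye & \zeros\end{bmatrix}$ extracts exactly $\adstatenext{\is}$, and $\dcondz{\is,\js}=\zeros$.

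First, unroll the affine dynamics of \Cref{assum:affine_dyn} by induction on $\ms$. This gives
\begin{align*}
\adstate_{\ts+1+\ms,\is}=\adGdynstate{\is}^{\ms}\adstatenext{\is}+\sum_{\ks=0}^{\ms-1}\adGdynstate{\is}^{\ms-1-\ks}\adGdynctrl{\is}\adctrl_{\ts+1+\ks,\is}+\sum_{\ks=0}^{\ms-1}\adGdynstate{\is}^{\ks}\adhdyn_\is
\end{align*}
for $\ms=0,\dots,\js-1$. Each intermediate state is therefore an affine function of $\yv$. Substituting these expressions eliminates the intermediate states, so they do not need to be lifted variables.

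Second, write each defining condition as a row block.
\begin{enumerate}
\item Nonemptiness of the intersection means some $\wspace$ satisfies $\Aconadvol{\is}\wspace\le\Bconadvol{\is}\adstate_{\ts+\js,\is}+\bconadvol{\is}$ and $\Aconvol(\bustate_{\ts+\js})\wspace\le\bconvol(\bustate_{\ts+\js})$, using \Cref{assum:hpoly_sets}. Substituting the $\ms=\js-1$ unrolled expression and moving the $\yv$-terms to the left gives the first block row. That row has $-\Bconadvol{\is}\adGdynstate{\is}^{\js-1}$ and $-\Bconadvol{\is}\adGdynstate{\is}^{\js-1-\ks}\adGdynctrl{\is}$ in the corresponding columns, and the drift sum appears on the right side. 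The second block row is the ego-occupancy constraint. This row is linear in $\wspace$ because $\ctrlnow$ is given, so $\bustate_{\ts+\js}$ is a fixed point determined by \eqref{eq:backup_traj}.
\item The state-domain requirements $\Aconadstate{\is}\adstate_{\ts+1+\ms,\is}\le\bconadstate{\is}$ for $\ms=0,\dots,\js-1$ are implicit in \Cref{assum:affine_dyn} and \eqref{eq:dzone_def}. Each one yields a block row that is lower triangular in the control columns, with $\Aconadstate{\is}\sum\adGdynstate{\is}^{\ks}\adhdyn_\is$ moved to the right side.
\item The control constraints $\Aconadctrl{\is}\adctrl_{\ts+\ks,\is}\le\bconadctrl{\is}$ give the block-diagonal tail.
\end{enumerate}

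Third, verify both inclusions. For $\subseteq$: any $\adstatenext{\is}\in\dz_{\is,\js}$ comes with admissible controls and an intersection point, and together these form a feasible $\yv$. For $\supseteq$: a feasible $\yv$ defines a trajectory through \Cref{assum:affine_dyn} that stays in $\adstateset_\is$ and ends with the two occupied sets intersecting. Taking the union over $\js$ then gives $\dz_\is$.

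The main obstacle is the bookkeeping of indices. The powers of $\adGdynstate{\is}$ and the upper limits of the drift sums must match the stated matrix exactly. The stated $\bcondz{\is,\js}$ uses sums up to $\js-2$ in the first row, which must be reconciled with the unrolled formula for $\ms=\js-1$, whose sum runs to $\js-2$ in $\ks$. A further point is the edge case $\js=1$: there are no control columns, the sums are empty, and the power $\adGdynstate{\is}^{0}=\eye$ must be handled. I would check these against small cases, $\js=1,2$, before writing the general induction.
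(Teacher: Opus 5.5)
Your proposal is correct and follows essentially the same route as the paper's proof. Both lift to the variable $[\wspace\tp, \adstatenext{\is}\tp, \adctrl_{\ts+1,\is}\tp, \ldots, \adctrl_{\ts+\js-1,\is}\tp]\tp$, unroll the affine dynamics so that each $\adstate_{\ts+\ks+1,\is}$ is written in terms of $\adstatenext{\is}$ and the controls, and substitute into the occupancy, state-domain, and control constraints; your explicit two-sided inclusion check and your $\js=1$ edge-case check make precise what the paper leaves implicit in ``we recover the AH-polyhedron above.''
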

\begin{proof}
    From \eqref{eq:dzone_def} and \Cref{assum:hpoly_sets,assum:affine_dyn}, each $\dz_{\is,\js}$ can be written as $\dz_{\is,\js}=\{\adstatenext{\is}\ |\ \Aconadvol{\is}\wspace\leq\Bconadvol{\is}\adstate_{\ts+\js,\is} + \bconadvol{\is}, \Aconvol(\bustate_{\ts + \js})\wspace\leq\bconvol(\bustate_{\ts + \js}), \adstate_{\ts+\ks+1,\is}=\adGdyn_{\is}\begin{bmatrix}
        \adstate_{\ts+\ks,\is}\tp&
        \adctrl_{\ts+\ks,\is}\tp
    \end{bmatrix}\tp + \adhdyn_{\is}, \Aconadstate{\is}\adstate_{\ts+\ks,\is}\leq\bconadstate{\is},\Aconadstate{\is}\adstate_{\ts+\js,\is}\leq\bconadstate{\is}, \Aconadctrl{\is}\adctrl_{\ts+\ks,\is}\leq\bconadctrl{\is},\ks=1,\cdots,\js-1\}$.
    We can rewrite $\adstate_{\ts+\ks+1,\is}=\adGdyn_{\is}\begin{bmatrix}
        \adstate_{\ts+\ks,\is}\tp&
        \adctrl_{\ts+\ks,\is}\tp
    \end{bmatrix}\tp + \adhdyn_{\is}$ as $\adstate_{\ts+\ks+1,\is}=\adGdynstate{\is}^{\ks}\adstate_{\ts+1,\is}+\adGdynstate{\is}^{\ks-1}\adGdynctrl{\is}\adctrl_{\ts+1,\is} + \adGdynstate{\is}^{\ks-2}\adGdynctrl{\is}\adctrl_{\ts+2,\is} + \cdots + \adGdynctrl{\is}\adctrl_{\ts+\ks,\is} + \adhdyn_\is + \adGdynstate{\is}\adhdyn_\is + \cdots + \adGdynstate{\is}^{\ks-1}\adhdyn_\is$ for $\ks=1,\cdots,\js-1$.
    Substituting into $\dz_{\is,\js}$, we recover the AH-polyhedron above using $\begin{bmatrix}
        \wspace\tp & \adstate_{\ts+1,\is}\tp & \adctrl_{\ts+1,\is}\tp & \cdots & \adctrl_{\ts+\js-1,\is}\tp
    \end{bmatrix}\tp$ as the states.
\end{proof}

The safety of $\ctrlnow$ can now be verified by solving LPs:
\begin{prop}[Occlusion-Aware Safety for Control Signal]\label{prop:occlusion_safe_lp}
    The control signal $\ctrlnow$ solves \Cref{prob:occlusion_control} if the LP
    \begin{align}\label{eq:occlusion_safe_lp}
    \begin{split}
        \regtext{find}\ & \xv,\\
        \suchthat & \Aconsafe{\is, \js, \ks} \xv \leq \bconsafe{\is, \js, \ks},\\
        & \Aconsafe{\is, \js, \ks} = \begin{bmatrix}
            \Aconfhset{\is, \ks} & \zeros \\
            \zeros & \Acondz{\is, \js} \\
            \Cconfhset{\is, \ks} & -\Ccondz{\is, \js}\\
            -\Cconfhset{\is, \ks} & \Ccondz{\is, \js}
        \end{bmatrix},\\
        & \bconsafe{\is, \js, \ks} = \begin{bmatrix}
            \bconfhset{\is, \ks} \\
            \bcondz{\is, \js}\\
            - \dconfhset{\is, \ks}\\
            \dconfhset{\is, \ks}
        \end{bmatrix},
    \end{split}
    \end{align}
    is infeasible for all $(\is, \ks) \in \{(1, 1), \cdots, (1, \nhset{1}), \cdots, (\ntype, 1), \cdots, (\ntype, \nhset{\ntype})\}$ and $\js \in \{1, \cdots, \nstop(\ctrlnow)\}$.
\end{prop}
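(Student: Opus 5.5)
The proof combines three ingredients: the game-theoretic result of \cite{zhang2021safe} recalled in the preliminaries, the exact AH-polyhedron representations of \Cref{thm:fhset_ahpoly,thm:dzone_ahpoly}, and the LP emptiness test for AH-polyhedrons. The plan is to show that infeasibility of \eqref{eq:occlusion_safe_lp} for all index triples implies $\fhset_\is \cap \dz_\is = \emptyset$ for every type $\is$. The result of \cite{zhang2021safe} then gives that the backup policy $\buctrlpol$ keeps the ego collision-free, which is exactly the requirement of \Cref{prob:occlusion_control}.

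First, by \Cref{thm:fhset_ahpoly} and \Cref{thm:dzone_ahpoly}, $\fhset_\is = \bigcup_\ks \fhset_{\is,\ks}$ and $\dz_\is = \bigcup_\js \dz_{\is,\js}$ hold exactly. Distributing the intersection over the unions gives
\[
\fhset_\is\cap\dz_\is=\bigcup_{\ks,\js}\left(\fhset_{\is,\ks}\cap\dz_{\is,\js}\right).
\]
It therefore suffices to show that each pairwise intersection is empty. For a fixed pair, I will write the intersection of two AH-polyhedrons in closed form, in the spirit of \cite{sadraddini2019linear}. Stack the latent variables as $\xv = [\yv\tp\ \zv\tp]\tp$, where $\yv$ parameterizes $\fhset_{\is,\ks}$ and $\zv$ parameterizes $\dz_{\is,\js}$. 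A point lies in both sets iff there exist $\yv,\zv$ with
\[
\Aconfhset{\is,\ks}\yv\le\bconfhset{\is,\ks},\quad \Acondz{\is,\js}\zv\le\bcondz{\is,\js},\quad \Cconfhset{\is,\ks}\yv+\dconfhset{\is,\ks}=\Ccondz{\is,\js}\zv,
\]
recalling that the offset of $\dz_{\is,\js}$ is $\zeros$. Writing the equality as two opposite inequalities reproduces exactly the block constraints $\Aconsafe{\is,\js,\ks}\xv\le\bconsafe{\is,\js,\ks}$ with the signs given in the statement; this is the step to check carefully. Hence $\fhset_{\is,\ks}\cap\dz_{\is,\js}=\emptyset$ iff the LP is infeasible.

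The remaining step is the link to \Cref{prob:occlusion_control}. Suppose, for contradiction, that some $\adstatenow{\is}\in\hset_\is$ and some admissible adversary controls produce $\buoccupancy_{\ts+\js}\cap\adoccupancy_{\ts+\js,\is}\neq\emptyset$. Then the successor state $\adstatenext{\is}$ lies in $\fhset_\is$ by \eqref{eq:fhset_def}. The subsequent controls witness membership of $\adstatenext{\is}$ in $\dz_{\is,\js}$ by \eqref{eq:dzone_def}. This contradicts disjointness; the contradiction argument is essentially the guarantee of \cite{zhang2021safe}, which I will cite.

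The main obstacle is keeping the quantifiers straight. \Cref{prob:occlusion_control} requires every adversary state to stay in $\adstateset_\is$ along the trajectory, and I must confirm that \Cref{thm:dzone_ahpoly} imposes the same domain constraints. Otherwise the danger zone could miss trajectories that the problem considers. The ego backup states $\bustate_{\ts+\js}$ are fixed once $\ctrlnow$ is fixed, so the constraints involving them are linear and the argument goes through.
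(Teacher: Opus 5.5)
Your proposal is correct and takes essentially the paper's route: the LP is exactly the emptiness test for $\fhset_{\is,\ks}\cap\dz_{\is,\js}$, and your splitting of $\Cconfhset{\is,\ks}\yv+\dconfhset{\is,\ks}=\Ccondz{\is,\js}\vc{z}$ into two inequalities does reproduce the last two block rows. Disjointness for every triple $(\is,\js,\ks)$ then certifies \Cref{prob:occlusion_control} type by type, because the backup trajectory is fixed once $\ctrlnow$ is; the paper cites \cite{zhang2021safe} for this final implication, whereas your contradiction argument proves it directly from the definitions (with the domain constraints you flagged holding because $\addyn_\is$ maps into $\adstateset_\is$, and imposed explicitly in $\Acondz{\is,\js}$).
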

\begin{proof}
    The LP \eqref{eq:occlusion_safe_lp} is exactly the emptiness check of the intersection between $\fhset_{\is,\ks}$ and $\dz_{\is,\js}$ \cite{sadraddini2019linear}, i.e.\ $\fhset_{\is,\ks}\cap\dz_{\is,\js}=\emptyset$, which certifies \Cref{prob:occlusion_control} for an $\is\in\{1,\cdots,\ntype\}$ and all $\js\in\{1, \cdots, \nstop(\ctrlnow)\}, \ks\in\{1, \cdots, \nhset{\is}\}$ as proven in \cite{zhang2021safe}.
    Since the ego's control policy is non-reactive, \eqref{eq:occlusion_safe_lp} certifies \Cref{prob:occlusion_control} for all $\is\in\{1,\cdots,\ntype\}$.
\end{proof}

By converting the game-theoretic safety conditions in \cite{zhang2021safe} into simple LPs, we enable formal verification in APRO without road-structure assumptions, overaproximations, or solving partial differential equations (PDEs) unlike existing set-based or value-function-based occlusion safety methods, providing a practical foundation to integrating occlusion-aware safety guarantees into mobile robots.

\section{Control Schemes}\label{sec:control_schemes}
Notice that \Cref{prop:occlusion_safe_lp} certifies occlusion-aware safety for a given control signal $\ctrlnow$.
Though this means that APRO can be used in bang-bang control against a backup policy, the resulting control signal is not necessarily optimal.
We provide two control schemes to address this: nonlinear optimization and bisection search.

\subsection{Nonlinear Optimization}\label{sec:opt_ctrl}
Consider an existing optimization problem where the cost is $\cost:\R^{\ncost}\to\R$ and the constraints are $\constraint:\R^{\ncost}\to\R^{\nconstraint}$.
If we assume that the preimage of any natural number through $\nstop$ is given as an H-polyhedron $\{\ctrl\in\ctrlset\ |\ \nstop(\ctrl) = \ms\} = \hpoly(\Aconstop{\ms}, \bconstop{\ms})$
for all $\ms \in \N$ (for example, intervals of acceleration may correspond to the number of timesteps needed to stop), then the optimization problem can be augmented with occlusion-aware safety as follows:
\begin{prop}[Safe Occlusion-Aware Optimal Control]\label{prop:occlusion_safe_opt}
    Let $\ms = \nstop(\ctrl)$.
    Then, the conditions in \Cref{prop:occlusion_safe_lp} hold for $\ctrl$ iff $\forall (\is, \ks) = (1, 1), \cdots, (\ntype, \nhset{\ntype}), \js = 1, \cdots, \ms$, $\exists \ctrl, \yv_{\is, \js, \ks}$ such that
    \begin{align}\label{eq:occlusion_safe_opt}
    \begin{split}
        \Aconstop{\ms}\ctrl &\leq \bconstop{\ms},\\
        \yv_{\is, \js, \ks} &\geq \zeros,\\
        \Aconsafe{\is, \js, \ks}\tp\yv_{\is, \js, \ks} &= \zeros,\\
        \bconsafe{\is, \js, \ks}\tp\yv_{\is, \js, \ks} &< 0.
    \end{split}
    \end{align}
\end{prop}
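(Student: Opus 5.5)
The plan is to apply Farkas' lemma to each LP in \Cref{prop:occlusion_safe_lp} separately, and then to use the stopping-time preimage constraint to tie the resulting certificates to the control $\ctrl$.

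\textbf{Farkas' lemma.} We use the alternative-system form: for a matrix $\Acon$ and vector $\bcon$, the system $\Acon\xv\leq\bcon$ is infeasible iff there exists $\yv\geq\zeros$ with $\Acon\tp\yv=\zeros$ and $\bcon\tp\yv<0$. The two alternatives are mutually exclusive. Indeed, if both held, then
\[0=\xv\tp\Acon\tp\yv=(\Acon\xv)\tp\yv\leq\bcon\tp\yv<0,\]
a contradiction. Exhaustiveness follows from strong LP duality, or equivalently from separating hyperplanes applied to the cone $\{\Acon\xv+\vc{s}\,|\,\vc{s}\geq\zeros\}$.

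\textbf{Instantiating the matrices.} Fix $\ctrl$ and set $\ms=\nstop(\ctrl)$. By definition of the H-polyhedral preimage, $\ms=\nstop(\ctrl)$ is equivalent to $\Aconstop{\ms}\ctrl\leq\bconstop{\ms}$. The first constraint of \eqref{eq:occlusion_safe_opt} therefore just records that $\ctrl$ lies in the cell on which the stopping index equals $\ms$.

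On that cell, the index range $\js=1,\cdots,\ms$ is the same as in \Cref{prop:occlusion_safe_lp}. The data $\Aconsafe{\is,\js,\ks}$ and $\bconsafe{\is,\js,\ks}$ are fully determined by $\ctrl$, entering through the backup trajectory \eqref{eq:backup_traj}, which fixes $\Aconvol(\bustate_{\ts+\js})$ and $\bconvol(\bustate_{\ts+\js})$. Here we invoke \Cref{thm:fhset_ahpoly,thm:dzone_ahpoly} for the block structure.

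\textbf{Forward direction.} Suppose the conditions of \Cref{prop:occlusion_safe_lp} hold for $\ctrl$. Then each LP \eqref{eq:occlusion_safe_lp} is infeasible. Farkas' lemma supplies a certificate $\yv_{\is,\js,\ks}$ for each triple $(\is,\js,\ks)$. Together with the given $\ctrl$, which satisfies the preimage constraint, this gives a solution of \eqref{eq:occlusion_safe_opt}.

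\textbf{Converse.} Suppose \eqref{eq:occlusion_safe_opt} holds. Each certificate $\yv_{\is,\js,\ks}$ excludes a feasible $\xv$ for the corresponding LP, by the contradiction computed above. Hence every LP is infeasible for all indices, which is exactly the hypothesis of \Cref{prop:occlusion_safe_lp}.

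\textbf{Main obstacle.} The delicate point is the quantifier structure. The statement writes $\exists\ctrl$ inside $\forall(\is,\js,\ks)$, yet the matrices depend on $\ctrl$. The equivalence is only meaningful if the same $\ctrl$, the one fixed by $\ms=\nstop(\ctrl)$, is used for all indices.

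I would make this explicit by reading the statement as follows: for the fixed $\ctrl$, there exist certificates $\yv_{\is,\js,\ks}$ for all indices, and $\ctrl$ satisfies $\Aconstop{\ms}\ctrl\leq\bconstop{\ms}$. Under that reading, the result reduces to the per-LP Farkas equivalence above. I would also remark that the conditions are bilinear in $(\ctrl,\yv)$, through $\bconvol(\bustate_{\ts+\js})$ and $\Aconvol(\bustate_{\ts+\js})$, which is why the scheme is presented as nonlinear optimization. This bilinearity does not affect the proof of the equivalence.
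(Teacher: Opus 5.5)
Your proof is correct and follows the paper's argument: you apply the inequality form of Farkas' lemma to each LP \eqref{eq:occlusion_safe_lp} separately, and the constraint $\Aconstop{\ms}\ctrl\leq\bconstop{\ms}$ pins the index range to $\js\leq\nstop(\ctrl)$. Your reading of the $\exists\ctrl$ nested inside the universal quantifier, as one fixed $\ctrl$ shared by all indices, makes explicit what the paper's two-line proof leaves implicit; the only caveat is your side remark that the constraints are bilinear in $(\ctrl,\yv)$, which holds only under the paper's extra assumptions (affine $\dyn$, $\buctrlpol$, $\bconvol$ and constant $\Aconvol$) and does not affect the equivalence.
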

\begin{proof}
    Note that \eqref{eq:occlusion_safe_lp} is infeasible iff $\nexists \xv$ such that $\Aconsafe{\is, \js, \ks}\xv\leq\bconsafe{\is, \js, \ks}$.
    Thus, by Farkas' lemma \cite{matouvsek2007understanding}, \Cref{prop:occlusion_safe_lp} holds iff the constraints in \eqref{eq:occlusion_safe_opt} are fulfilled.
\end{proof}
\noindent \Cref{prop:occlusion_safe_opt} implies that the optimal solution to \Cref{prob:occlusion_control} is $\ctrlnow=\argmin_{\ctrl}\{\cost\ |\ \ctrl\in\{\optctrl_\ms\ |\ \ms\in\N\}\}$, where $\optctrl_\ms$ is the $\argmin_{\ctrl}$ of $\cost$ subjected to $\constraint$ and the constraints in \Cref{prop:occlusion_safe_opt}.
Note that in practice, $\bconsafe{\is, \js, \ks}\tp\yv_{\is, \js, \ks} < 0$ is treated as $\bconsafe{\is, \js, \ks}\tp\yv_{\is, \js, \ks} \leq -\eps$ for some small $\eps \in \R_+$.

We note that the constraints in \eqref{eq:occlusion_safe_opt} are nonlinear, which is solvable by general solvers such as Interior Point Opimizer (Ipopt) \cite{wachter2006implementation}.
If $\cost, \constraint$ are affine or bilinear, $\dyn, \buctrlpol, \bconvol$ are affine, and $\Aconvol$ is constant, then the optimization is specifically a bilinear program that can be handled by more specialized solvers such as Gurobi \cite{gurobi2023gurobi}.
Furthermore, \eqref{eq:occlusion_safe_opt} only needs to be solved for all $\ms$ whose preimage is non-empty, which is typically finite when $\ctrlset$ is bounded.
This allows APRO to augment existing optimization-based planners such as \cite{zhang2020optimization} with occlusion-aware safety guarantees.
Nevertheless, solving \eqref{eq:occlusion_safe_opt} may still be computationally expensive, motivating us to next present a faster control scheme suitable for online planning.

\subsection{Bisection Search}\label{sec:bisection_search}
Instead of slowly solving a nonlinear program, by assuming $\ctrlset \subseteq \R$, $\cost$ is monotonically decreasing with $\ctrlnow$, and safety is also monotonic (i.e.\ there exists a known $\ctrlsafelb \in \ctrlset$ such that for any $\ctrlsafeub \in \ctrlset$ that solves \Cref{prob:occlusion_control}, any $\ctrlnow \in [\ctrlsafelb, \ctrlsafeub]$ is also a solution), we can obtain a near-optimal control signal by evaluating \Cref{prop:occlusion_safe_lp} $\nbisect \in \N$ times using a standard bisection search over $[\ctrlmin, \ctrlmax]$, where $\ctrlmax \in \ctrlset$ is an upper bound signal and $\ctrlmin \in [\ctrlsafelb, \ctrlmax]$ is a backup signal.

Many works in occlusion-aware planning uses path-velocity decomposition \cite{lee2021limited}, in which a high-level planner first defines a path and a lower-level controller regulates acceleration along it.
In this case, the required assumptions are fulfilled since we only control acceleration along the path, going faster is preferred, and going slower is always safer.
This enables APRO to augment existing path planners \cite{nawaz2025occupancy, chung2026selecting} with real-time occlusion-aware safety guarantees.
\section{Simulation Experiments}\label{sec:sim_exp}
We now validate APRO with simulation experiments by comparing the control schemes in \Cref{sec:control_schemes} and existing methods with formal safety guarantees.
All simulation experiments were performed using Python on a desktop computer with an AMD Ryzen Threadripper 7960X processor (24 cores, 48 threads) and 125 GiB RAM.

\subsection{Experiment Setup}
We conducted two simulation experiments: a controlled setup through a narrow gap of hidden set, and a more realistic setup within a data replay of a real-life dataset.
In both experiments, the ego follows double integrator dynamics under a path-velocity decomposition scheme \cite{lee2021limited} with velocity $\in [0, 2]$ and acceleration $\in [-2, 2]$.
When it observes an occluded agent, its backup policy $\buctrlpol$ is to brake until it stops.
The ego occupies a rectangular volume of length $4.6$ and width $1.85$, rotated by its current heading.

At each timestep for all methods, the ego executes the control signal returned by the method, or the backup policy if no signal is returned.
We verify safety of the proposed control by checking if the FRS of the hidden set with adversary dynamics intersects with the ego's volume if the ego were to brake to a stop, which can be computed in closed-form with Minkowski sum in this specific setup.

\textit{Metrics:}
We report safety rate as the ratio of safe timesteps over all timesteps where the backup policy is not triggered. 
We also reported the mean computation time per timestep, defined as the time taken from obtaining the hidden set to returning the control signal.

\subsubsection{Narrow Gap Experiment}\label{sec:gap_exp}
We start off with a controlled setup to minimize sources of confounding variables and isolate the effects of our control scheme and reachable set representation.
In this experiment, the ego is initialized before two rectangular blocks of hidden set that forms a narrow gap of width $\gapsize\in\{4, 5, 6, 7\}$.
The hidden set corresponds to occluded pedestrians with 2-D double integrator dynamics with velocity $\in[-1.2, 1.2]^2$ and acceleration $\in[-0.5, 0.5]^2$.
The pedestrians occupy no volume.
We recorded the number of timesteps ($\dt=0.4$) it takes for the ego to pass through the gap, or declare timeout when $150$ timesteps have been reached.

To compare the effects of the different control schemes in \Cref{sec:control_schemes}, we compare APRO implemented on bang-bang control, bisection search (\Cref{sec:bisection_search}) with $\nbisect=8$, and optimal control (\Cref{sec:control_schemes}).
For optimal control, we compared the effects of using a bilinear program solver \cite{gurobi2023gurobi} and that of using a general nonlinear program solver \cite{wachter2006implementation}.
We set the cost as $-\accelsnow$ to encourage higher speed and apply no additional constraints beyond those related to occlusions.

\textit{Baselines:}
We compare against other occlusion-aware planning methods with safety guarantees by incorporating them in a bang-bang control scheme akin to APRO's.
We implement \cite{lee2021limited} and \cite{firoozi2022occlusion} by setting each edge of the hidden set as an occluded edge, and overapproximating the ego's volume with a circle in \cite{firoozi2022occlusion}.
We also compare against \cite{zhang2021improved} by computing the danger zone and forward hidden set with an HJ reachability toolbox \cite{schmerling2021hj} (since their analytic solution does not apply to our experiment setup).
For the danger zone, we uniformly divided the states $[-5, 5]^2\times[-1.2, 1.2]^2$ relative to the ego's position into $\ngrid^4$ grids, where $\ngrid\in\{30, 50\}$.
For the forward hidden set, we similarly divided $[10, 30]\times[14-0.5\gapsize, 16+0.5\gapsize]\times[-1.2, 1.2]^2$ into $\ngrid^4$ grids.
We computed the sets using PDE solvers with low and very high accuracy \cite{schmerling2021hj}, and check their intersections by interpolating the resulting value functions.
As a control baseline, we also reported the performance when the ego always travels at maximum speed.
An overview of the different baselines is shown in \Cref{fig:narrow_gap}.

\begin{figure}[t]
\centering
    \includegraphics[width=1\columnwidth]{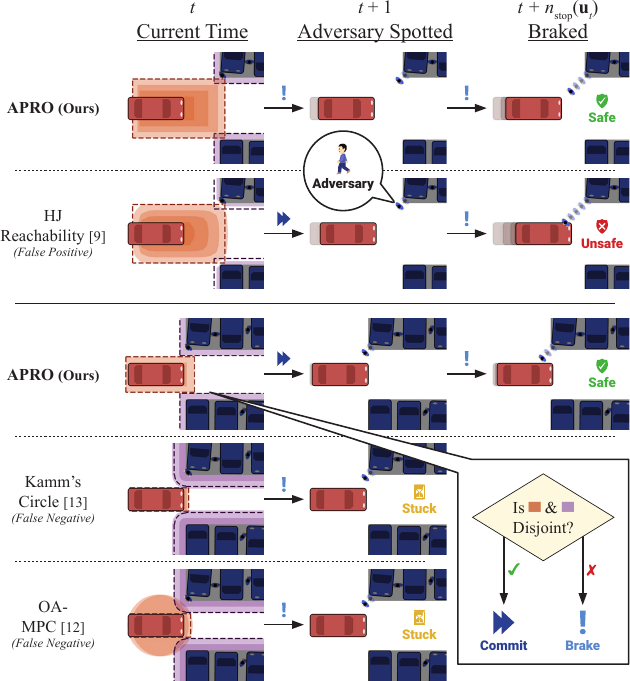}
\caption{
An overview of the different methods compared in \Cref{sec:gap_exp}.
For each method, the ego (red) commits to the proposed control if the orange set (danger zone in APRO and \cite{zhang2021safe}) is disjoint from the purple set (forward hidden set in APRO and \cite{zhang2021safe}), whose ground truths are labeled with dashes.
APRO is the only one that is \textit{exact}: the ego is not conservative, but is able to brake in time if an agent comes out of occlusion (gray rectangles).
}\label{fig:narrow_gap}
\vspace*{-0.5cm}
\end{figure}

\subsubsection{AVP Experiment}\label{sec:avp_demo}
We now validate APRO on a more realistic scenario: a data replay of a crowded parking lot from the Dragon Lake Parking (DLP) Dataset \cite{shen2022parkpredict+}, involving multiple types of occluded agents, dynamic obstacles, changing ego plans, and construction of hidden sets from obstacle representations.
We compare against other methods \cite{zhang2021improved,lee2021limited} implemented as safety filters on top of a Hybrid A* planner \cite{chung2026selecting, nawaz2025occupancy}.
At each initialization, we randomly place the ego at the start or end, and top, middle, or bottom of each of the $4$ horizontal lanes in a randomly chosen scene of the dataset, then use \cite{nawaz2025occupancy, chung2026selecting} to generate a path for the ego to follow at every $20$ timesteps ($\dt=0.2$).
At every timestep, the ego perceives all obstacles intersecting with a rectangular (width $12.025$, height $13.8$) field of view aligned with the back of the car \cite{nawaz2025occupancy}.
We then obtain the hidden set by greedily merging the obstacles with convex hull (until $6$ remains), raycasting from the ego to obtain the occluded area, intersecting with where the adversaries are expected to appear, and performing Pontryagin difference with their volumes \cite{gillies2024shapely}, and Cartesian product with their non-workspace domain.
We visualize this in \Cref{fig:avp}.
The occluded agents include pedestrians with the same specifications as \Cref{sec:gap_exp}, as well as horizontally and vertically traveling cars with the same specifications as the ego and assumed to appear only in their corresponding lanes.
The simulation terminates after $100$ timesteps and we record the average speed of each simulation.

We run $30$ randomized scenarios each with APRO, \cite{lee2021limited} (augmented for volumetric adversaries using \cite{koschi2017spot}), the no safety control, and \cite{zhang2021safe} (with low solver accuracy and $\ngrid=30$, and very high solver accuracy and $\ngrid=50$, where the grids divide the Cartesian product of $[-20, 20]^2$ and the adversaries' non-workspace domain, relative to the ego's current position) on the bisection search scheme with $\nbisect=4$.
We did not compare with \cite{firoozi2022occlusion} due to its inability to account for occluded agents with volume.

\begin{figure}[t]
\centering
    \includegraphics[width=1\columnwidth]{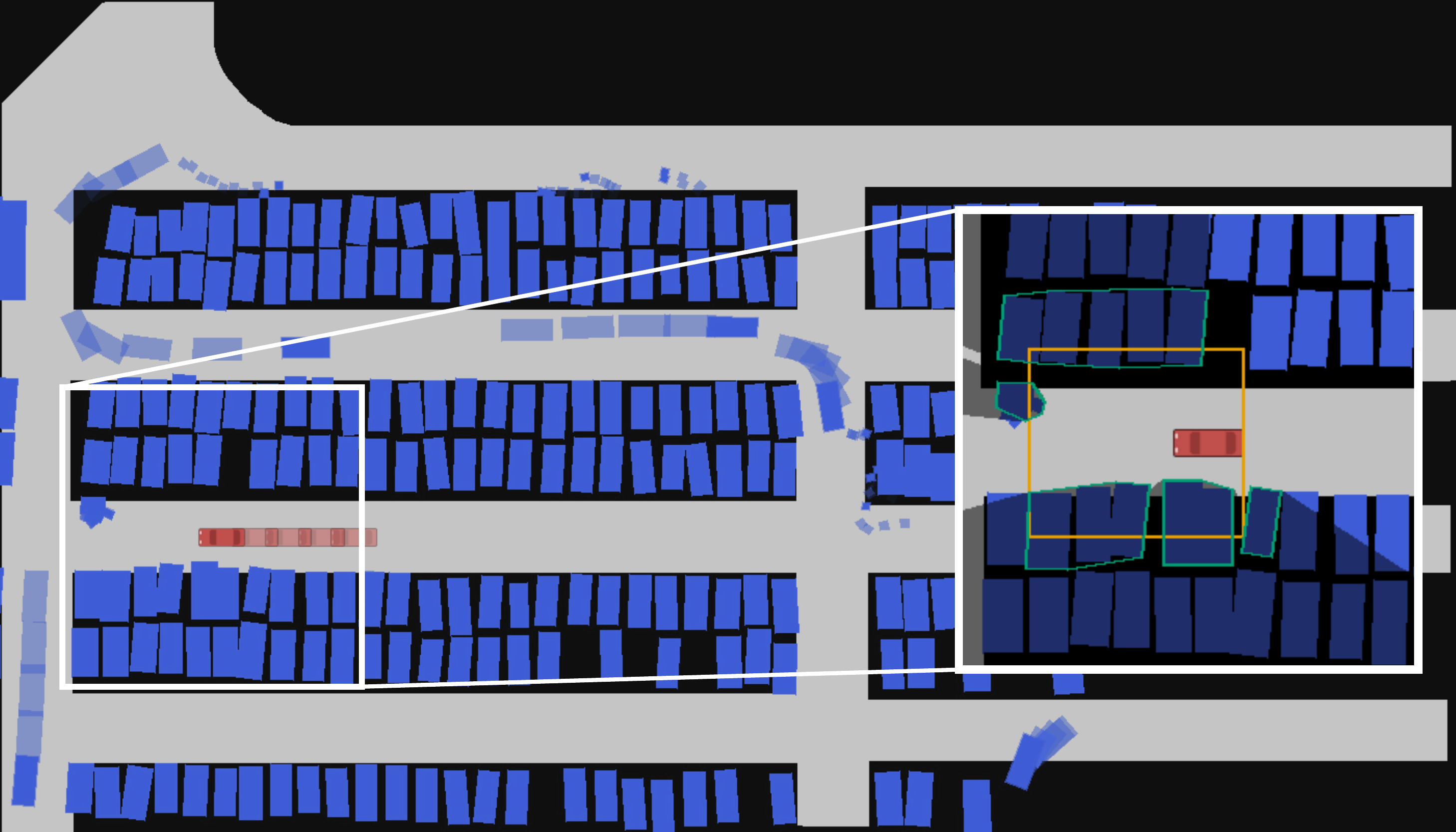}
\caption{
A snapshot of our AVP experiment where the ego (red) navigates a data replay of the DLP Dataset \cite{shen2022parkpredict+} with dynamic obstacles (blue).
At each timestep, the ego perceives obstacles within its field of view (orange), merge the obstacles with convex hull (green), and use raycasting to obtain the occluded area (shaded).
APRO enables safety against all possible adversaries hiding within the occluded area without being too conservative.
}\label{fig:avp}
\vspace*{-0.5cm}
\end{figure}

\subsection{Results and Discussion}
See results in \Cref{table:results}.
APRO maintains a $100\%$ safety guarantee against occluded agents regardless of the control scheme used, confirming the validity of our formulations.
Among all control schemes, bang-bang control has the lowest computation time, though it can return suboptimal solutions.
Compared to using a general solver \cite{wachter2006implementation} to find the optimal solution, \cite{gurobi2023gurobi} reduces the computation time by $\approx2$ to $13$ times, which is further reduced by $\approx4$ to $9$ times using bisection search.
However, we note that these control schemes require different sets of assumptions, so the slower methods may still be useful depending on the application.

\begin{table*}[!ht]
\centering
\captionsetup{font=small}
\footnotesize
\caption{Results of our proposed method, APRO, under different control schemes, compared with other occlusion-aware safety methods with formal guarantees \cite{zhang2021safe, lee2021limited, firoozi2022occlusion} in the narrow gap and AVP experiments.
The best results in each evaluation metric ($\uparrow$: higher is better, $\downarrow$: lower is better), sans those from not having a safety control, are bolded.}
\label{table:results}
\begin{tabular}{c|r||r|r|r|r||r|r|r|r||r||r||r}
    &
    \multirow{6}{*}{%
  \parbox{0.483cm}{\centering
    Gap\\
    Size\\
    ($\gapsize$)
  }%
}
& \multicolumn{11}{c}{Method}\\
     \cline{3-13}
     && \multicolumn{4}{c||}{APRO (Ours)} & \multicolumn{4}{c||}{HJ Reachability \cite{zhang2021safe}} & & \\
     \cline{3-10}
    Scena-&  & & & & & \multicolumn{4}{c||}{Solver Accuracy} & \multicolumn{1}{c||}{Kamm's} & \multicolumn{1}{c||}{OA-}\\
    \cline{7-10}
    rio&  & \multicolumn{1}{c|}{Bang-} & \multicolumn{1}{c|}{Ipopt}&\multicolumn{1}{c|}{Gurobi} & \multicolumn{1}{c||}{Bisec-}& \multicolumn{2}{c|}{Low} &\multicolumn{2}{c||}{Very High} & \multicolumn{1}{c||}{Circle} & \multicolumn{1}{c||}{MPC} & \multicolumn{1}{c}{None}\\
    \cline{7-10}
    & &\multicolumn{1}{c|}{Bang}&\multicolumn{1}{c|}{\cite{wachter2006implementation}}&\multicolumn{1}{c|}{\cite{gurobi2023gurobi}}&\multicolumn{1}{c||}{tion}&\multicolumn{4}{c||}{Grid Size ($\ngrid$)}&\multicolumn{1}{c||}{\cite{lee2021limited}}&\multicolumn{1}{c||}{\cite{firoozi2022occlusion}}\\
    \cline{7-10}
    &&&&&& 30 & 50 & 30 & 50 &&\\
    \hline\hline
    \multicolumn{13}{c}{Timesteps Needed to Reach the Goal $\downarrow$}\\
    \hline\hline
    &4&133&\textbf{78}&\textbf{78}&\textbf{78}&123&128&130&131&Timeout&Timeout&46\\
    \cline{2-13}
    Narrow&5&80&\textbf{51}&\textbf{51}&\textbf{51}&74&76&77&78&128&Timeout&46\\
    \cline{2-13}
    Gap&6&\textbf{46}&\textbf{46}&\textbf{46}&\textbf{46}&\textbf{46}&\textbf{46}&\textbf{46}&\textbf{46}&78&Timeout&46\\
    \cline{2-13}
    &7&\textbf{46}&\textbf{46}&\textbf{46}&\textbf{46}&\textbf{46}&\textbf{46}&\textbf{46}&\textbf{46}&\textbf{46}&114&46\\
    \hline\hline
    \multicolumn{13}{c}{Average Speed (ms$^{-1}$) $\uparrow$}\\
    \hline\hline
    \multirow{2}{*}{AVP}&\multirow{2}{*}{-}&\multirow{2}{*}{-}&\multirow{2}{*}{-}&\multirow{2}{*}{-} & 0.846 & \textbf{1.219} &\multirow{2}{*}{-}&\multirow{2}{*}{-}&1.017&0.692 &\multirow{2}{*}{-} & 1.224\\
    &&&&&$\pm$0.822&$\pm$\textbf{0.939}&&&$\pm$0.884&$\pm$0.710&&$\pm$0.947\\
    \hline\hline
    \multicolumn{13}{c}{Safety Rate $\uparrow$}\\
    \hline\hline
    &4&\textbf{1.000}&\textbf{1.000}&\textbf{1.000}&\textbf{1.000}&0.945&0.973&0.987&0.987&\textbf{1.000}&\textbf{1.000}&0.500\\
    \cline{2-13}
    Narrow&5&\textbf{1.000}&\textbf{1.000}&\textbf{1.000}&\textbf{1.000}&0.920&0.960&0.960&0.980&\textbf{1.000}&\textbf{1.000}&0.500\\
    \cline{2-13}
    Gap&6&\textbf{1.000}&\textbf{1.000}&\textbf{1.000}&\textbf{1.000}&\textbf{1.000}&\textbf{1.000}&\textbf{1.000}&\textbf{1.000}&\textbf{1.000}&\textbf{1.000}&1.000\\
    \cline{2-13}
    &7&\textbf{1.000}&\textbf{1.000}&\textbf{1.000}&\textbf{1.000}&\textbf{1.000}&\textbf{1.000}&\textbf{1.000}&\textbf{1.000}&\textbf{1.000}&\textbf{1.000}&1.000\\
    \hline
    \multirow{2}{*}{AVP}&\multirow{2}{*}{-}&\multirow{2}{*}{-}&\multirow{2}{*}{-}&\multirow{2}{*}{-}&\textbf{1.000}&0.551&\multirow{2}{*}{-}&\multirow{2}{*}{-}&0.544&\textbf{1.000}&\multirow{2}{*}{-}&0.548\\
    &&&&&\textbf{$\pm$0.000}&$\pm$0.332&&&$\pm$0.344&\textbf{$\pm$0.000}&&$\pm$0.340\\
        \hline\hline
    \multicolumn{13}{c}{Computation Time per Timestep (s) $\downarrow$}\\
    \hline\hline
    &\multirow{2}{*}{4}&0.003&0.347&0.063&0.016&1.227&1.422&4.948&5.927&\textbf{0.001}&0.002&0.000\\
    &&$\pm$0.001&$\pm$0.389&$\pm$0.013&$\pm$0.007&$\pm$0.039&$\pm$0.077&$\pm$0.201&$\pm$0.267&$\pm$\textbf{0.000}&$\pm$0.001&$\pm$0.000\\
    \cline{2-13}
    &\multirow{2}{*}{5}&0.005&1.035&0.077&0.019&1.281&1.469&4.995&6.051&\textbf{0.001}&0.003&0.000\\
    Narrow&&$\pm$0.001&$\pm$2.338&$\pm$0.022&$\pm$0.012&$\pm$0.256&$\pm$0.020&$\pm$0.290&$\pm$0.351&$\pm$\textbf{0.000}&$\pm$0.001&$\pm$0.000\\
    \cline{2-13}
    Gap&\multirow{2}{*}{6}&0.006&0.113&0.052&0.006&1.370&1.476&4.977&6.151&\textbf{0.002}&0.003&0.000\\
    &&$\pm$0.000&$\pm$0.021&$\pm$0.003&$\pm$0.000&$\pm$0.508&$\pm$0.029&$\pm$0.065&$\pm$0.547&$\pm$\textbf{0.000}&$\pm$0.001&$\pm$0.000\\
    \cline{2-13}
    &\multirow{2}{*}{7}&0.006&0.104&0.053&0.006&1.306&1.522&4.995&6.193&\textbf{0.002}&0.005&0.000\\
    &&$\pm$0.000&$\pm$0.022&$\pm$0.004&$\pm$0.000&$\pm$0.010&$\pm$0.010&$\pm$0.057&$\pm$0.694&$\pm$\textbf{0.000}&$\pm$0.001&$\pm$0.000\\
    \hline
    \multirow{2}{*}{AVP} & \multirow{2}{*}{-} & \multirow{2}{*}{-} & \multirow{2}{*}{-} & \multirow{2}{*}{-} & 0.069 & 2.959 & \multirow{2}{*}{-} & \multirow{2}{*}{-} & 32.145 & \textbf{0.039} & \multirow{2}{*}{-} & 0.000\\
    & & & & & $\pm$0.049 & $\pm$1.583 & & & $\pm$24.135 & $\pm$\textbf{0.021} & & $\pm$0.000
\end{tabular}
\end{table*}

Though there are instances when \cite{zhang2021safe} enables the ego to travel faster and arrive at the goal earlier than APRO, their proposed controls are \textit{not} actually safe, even at the highest accuracy setting with the highest grid size (setting $\ngrid=60$ results in memory error in Python).
We attribute this to numerical errors in gridding and PDE-solving, which are known issues for HJ reachability \cite{he2025threshold}.
Regardless, all settings of HJ require computation time far slower than what is typically required for real-life deployment.
On the other hand, \cite{lee2021limited, firoozi2022occlusion} are able to maintain their $100\%$ safety guarantees, with similar or even lower computation time compared with APRO.
However, their conservatism can cause them to get stuck before the narrow gap (timeout) or yield suboptimal solutions (longer timesteps needed to reach the goal and slower average speed).
We also stress that \cite{lee2021limited} is only able to handle double integrator dynamics and \cite{firoozi2022occlusion} uses knowledge of only the adversaries' max velocity, whereas APRO is flexible to handling multiple types of agents with different affine dynamics.

\section{Hardware Experiment}\label{sec:hardware}
To demonstrate the applicability of APRO on a real robot, we recreated the experiment setup in \Cref{sec:gap_exp} with an F1/10 class robotic vehicle \cite{o2019f1} equipped with a Hokuyo UST-10LX lidar sensor.
Our method is computed with Python and ROS 2 \cite{macenski2022robot} on the model car's on-board computer, which has an AMD Ryzen AI 9 HX PRO 370 processor (12 cores, 24 threads) and 27 GiB RAM.

\subsection{Demonstration Setup}
Similar to \Cref{sec:sim_exp}, we placed foam boards to create a narrow corridor of hidden set with gap size $\gapsize\in\{0.65 \regtext{ m}, 0.73 \regtext{ m}, 0.81 \regtext{ m}\}$ around the ego.
At each timestep, the ego constructs the polyhedral hidden set by perceiving its distance from the wall using the lidar sensor.
It then computes a maximum safe velocity to travel at using APRO (bisection search, $\nbisect=4$) by modeling itself as a double integrator with velocity $\in[0\regtext{ ms}^{-1}, 3\regtext{ ms}^{-1}]$ and acceleration $\in[-2\regtext{ ms}^{-2}, 2\regtext{ ms}^{-2}]$, and the pedestrian adversary as a double integrator with velocity $\in[-0.2\regtext{ ms}^{-1}, 0.2\regtext{ ms}^{-1}]^2$ and acceleration $\in[-0.1\regtext{ ms}^{-2}, 0.1\regtext{ ms}^{-2}]^2$.
After accelerating to the maximum safe velocity, we put out an obstacle to trigger the lidar sensor, after which the ego would start to brake.
For evaluation, we measure the computation time per timestep (from constructing the hidden set to returning the safe velocity) and the safety rate (the ego is safe if a pedestrian with the assumed specifications cannot theoretically reach the ego from the hidden set within the braking time).
For each gap size, we repeat the test for $5$ times, then repeat the test for $5$ more times without using our method (traveling at maximum speed).
A picture of our setup is shown in \Cref{fig:front_figure}.

\subsection{Results and Discussion}
In all runs, APRO is always safe with a safety rate of $1.000\pm0.000$, whereas the baseline always fails with a safety rate of $0.000\pm0.000$.
Even running on the on-board computer, APRO still has a computation time of $0.040\pm0.013$ s per timestep, which is faster than the $10$ Hz typically desired for practical deployment.

We note that model mismatch did occur when translating to hardware.
For example, the ego's maximum deceleration was nonlinear, and the ego's states can get out of the expected bounds.
In this case, we were able to maintain safety due to underapproximation of the ego's maximum deceleration (which overapproximates the danger zone).
To more formally and robustly maintain guarantee under model mismatch, error bounds can be integrated into APRO if known, similar to \cite{chung2024goal, chung2025guaranteed}, which we leave as future work.

\section{Conclusion}\label{sec:conclusion}
In this paper, we present APRO, an occlusion-aware planning framework for mobile robot navigation that leverages game-theoretic safety conditions and exact AH-polyhedron reachability analysis.
By reformulating the forward hidden set and danger zone computations in closed-form using AH-polyhedrons, APRO provides formal safety guarantees without conservatism, numerical errors, and scalability limitations of prior set-based and HJ reachability approaches.
The effectiveness of the approach was validated through simulation and hardware experiments, including a real-world-inspired AVP data replay, where APRO consistently achieved a $100\%$ safety rate while maintaining real-time performance and compatibility with real sensors such as lidar and existing planners such as Hybrid A*.

\subsubsection*{Limitations}
We observe three major limitations for APRO.
Firstly, APRO has only been demonstrated under idealized modeling assumptions, including perfect affine dynamics, perfect perception, and exact knowledge of the hidden set.
This restricts applicability to robots and adversaries with more complex or highly nonlinear behaviors and settings with significant uncertainty.
Though APRO can theoretically be extended to PWA or learned systems with bounded modeling and perception error bounds using strategies similar to \cite{chung2024goal, chung2025guaranteed}, we leave it as future work.
Secondly, the computation time of APRO scales linearly with the number of obstacles and adversary types.
This may possibly be improved by solving LPs in parallel or encoding multiple LPs into a mixed-integer linear program (MILP) \cite{chung2025provably}, which we also leave to future work.
Finally, we acknowledge that our hardware experiment is only performed on a simplistic scenario on an F1/10 model car.
Towards this, we are working on incorporating APRO on a full-scale AVP system navigating a real parking lot with camera-based sensors.





\renewcommand{\bibfont}{\normalfont\footnotesize}
{\renewcommand{\markboth}[2]{}
\printbibliography}

\end{document}